\newcolumntype{L}[1]{>{\hsize=#1\hsize\raggedright\arraybackslash}X}%
\newcolumntype{R}[1]{>{\hsize=#1\hsize\raggedleft\arraybackslash}X}%
\newcolumntype{C}[1]{>{\hsize=#1\hsize\centering\arraybackslash}X}%
\def\vec#1{\mathchoice{\mbox{\boldmath$\displaystyle#1$}}
  {\mbox{\boldmath$\textstyle#1$}}
  {\mbox{\boldmath$\scriptstyle#1$}}
  {\mbox{\boldmath$\scriptscriptstyle#1$}}}
\newcommand{\mech}{\ensuremath{\mathcal{A}}\xspace}
\newtheorem{theorem}{Theorem}
\newtheorem{definition}{Definition}[section]
\newcommand{\Ps}{\mathbb{P}}
\newcommand{\1}{\mathbbm{1}}
\newcommand{\D}{\mathcal{D}}
\newcommand{\X}{\mathcal{X}}
\newcommand{\A}{\mathcal{A}}
\newcommand{\Y}{\mathcal{Y}}
\def\eg{{\em e.g.,}\xspace}
\def\ie{{\em i.e.,}\xspace}
\def\cf{{\em c.f.,}\xspace}
\newcommand{\tabref}[1]{Table~\ref{#1}\xspace}
\newcommand{\resource}[1]{\textsc{#1}}
\newcommand{\DPNR}{\resource{dpnr}\xspace}
\algnewcommand\algorithmicinput{\textbf{Input:}}
\algnewcommand\Input{\item[\algorithmicinput]}
\algnewcommand\algorithmicoutput{\textbf{Output:}}
\algnewcommand\Output{\item[\algorithmicoutput]}
\algnewcommand\algorithmictier{\textbf{Role:}}
\algnewcommand\Tier{\item[\algorithmictier]}
\title{Differentially Private Representation for NLP: Formal Guarantee and An Empirical Study on Privacy and Fairness}
\author{Lingjuan Lyu$^\1$, Xuanli He$^\dagger$ \and Yitong Li$^\ast$ \\[0.5em]
$^\1$National University of Singapore, lyulj@comp.nus.edu.sg \\[0.5em]
$^\dagger$Monash University, xuanli.he1@monash.edu \\[0.5em]
$^\ast$The University of Melbourne, yitongl4@student.unimelb.edu.au \\[0.5em]
}
\begin{document}
\maketitle

\begin{sloppypar}


\begin{abstract}
It has been demonstrated that hidden representation learned by deep model can encode private information of the input, hence can be exploited to recover such information with reasonable accuracy.
To address this issue, we propose a novel approach called Differentially Private Neural Representation (DPNR) to preserve privacy of the extracted representation from text. 
DPNR utilises Differential Privacy (DP) to provide formal privacy guarantee.
Further, we show that masking words via dropout can further enhance privacy.
To maintain utility of the learned representation, we integrate DP-noisy representation into a robust training process to derive a robust target model, which also helps for model fairness over various demographic variables.
Experimental results on benchmark datasets under various parameter settings demonstrate that DPNR largely reduces privacy leakage without significantly sacrificing the main task performance.
\end{abstract}

\section{Introduction}
\label{sec:introduction}
Many language applications have involved deep learning techniques to learn text representation through neural models~\cite{bengio2003neural,mikolov2013distributed,devlin2019bert}, performing composition over the learned representation for downstream tasks~\cite{collobert2011natural,socher-etal-2013-recursive}.
However, the input text often provides sufficient clues to portray the author, such as gender, age, and other important attributes.
For example, sentiment analysis tasks often have privacy implications for authors whose text is used to train models.
Many user attributes have been shown to be easily detectable from online review data, as used extensively in sentiment analysis results~\cite{hovy2015user,potthast2017overview}.
Private information can take the form of key phrases explicitly contained in the text. However, it can also be implicit. For example, demographic information about the author of a text can be predicted with above chance accuracy from linguistic cues in the text itself~\cite{preoctiuc2015analysis}. 

On the other hand, even the learned representation, rather than the text itself, may still contain sensitive information and incur significant privacy leakage. One might argue that sensitive information like gender, age, location and password should not be leaked out and should have been removed from representation.
However, on the intermediate representation level, which is trained from the input text to contain useful features for the prediction task, it can meanwhile encode personal information which might be exploited for adversarial usages, especially a modern deep learning model has vastly more capacity than they need to perform well on their tasks.
And, it has been justified that an attacker can recover private variables with higher-than-chance accuracy, only using hidden representation~\cite{li2018towards,coavoux2018privacy}.
Therefore, the fact that representations appear to be abstract real-numbered vectors should not be misconstrued as being safe.

The naive solution of removing protected attributes is insufficient: other features may be highly correlated with, and thus predictive of, the protected attributes~\cite{pedreshi2008discrimination}.
To tackle with these privacy issues, \citet{li2018towards} proposed to train deep models with adversarial learning, which explicitly obscures individuals' private information, while improves the robustness and privacy of neural representation in part-of-speech tagging and sentiment analysis tasks.
In a parallel study, \citet{coavoux2018privacy} proposed defence methods based on modifications of the training objective of the main model. However, both works provide only empirical improvements in privacy, without any formal guarantees. Prior works have approached formal differential privacy guarantee by training differentially private deep models~\cite{abadi2016deep,mcmahan2018learning,yu2019differentially}. However, these works generally 
only considered the training data privacy rather than the test data privacy. While cryptographic methods can be used for privacy protection, it could be resource-hungry or overly complex for the user.

To alleviate the above limitations, we take inspirations from differential privacy~\cite{dwork2014algorithmic} 
to provide formal privacy guarantee of the extracted representation from user-authored text.
Meanwhile, we propose a robust training algorithm 
to derive a robust target model to maintain utility, which also offers fairness as a by-product.
To the best of our knowledge, our work is the only work to date that can provide formal differential privacy guarantee of the extracted representation, while ensuring fairness.

\textbf{Our main contributions} include:
\begin{compactitem}
\item For the first time, the privacy of the extracted neural representation from text is formally quantified in the context of differential privacy. A novel approach called Differentially Private Neural Representation (DPNR) is proposed to perturb the extracted representation.
Also, we prove that masking words via dropout can further enhance privacy.
\item To maintain utility, we propose a robust training algorithm that incorporates the noisy training representation 
in the training process to derive a robust target model, which also reduces 
model discrimination in most cases.
\item On benchmark datasets across various domains and multiple tasks, we empirically 
demonstrate that our approach yields comparable accuracy to the non-private baseline on the main task, while significantly outperforms the non-private baseline and adversarial learning on the privacy task\footnote{code and preprocessed datasets are available at: https://github.com/xlhex/dpnlp.git}.
\end{compactitem}

\section{Preliminary: Differential Privacy}

Differential privacy~\cite{dwork2014algorithmic} provides a mathematically rigorous definition of privacy and 
has become a de facto standard for privacy analysis. 
Within DP framework, there are two general settings: central DP (CDP) and local DP (LDP). 

In CDP, a trusted data curator 
answers queries or releases differentially private models by using randomisation mechanisms~\cite{dwork2014algorithmic,abadi2016deep,yu2019differentially}.
For scenarios where data are sourced from end users, and end users do not trust any third parties, DP should be enforced in a ``local" manner to enable end users to perturb their data before publication, which is termed as LDP~\cite{dwork2014algorithmic,duchi2013local}. Compared with CDP, LDP offers a stronger level of protection. 

In our system, we aim to protect the test-phase privacy of the extracted neural representations 
from end users, 
we therefore adopt LDP. 
LDP has shown the advantage that the data is randomised before 
individuals disclose their personal information, so the server and the middle eavesdropper can never see or receive the raw data. 
In terms of LDP mechanisms, randomised response~\cite{warner1965randomized,duchi2013local} and its variants have been widely used for aggregating statistics, such as frequency estimation, heavy hitter estimation, etc~\cite{erlingsson2014rappor}. 

\begin{definition}
\label{def:ldp}

Let $\mech: \mathcal{D} \to \mathcal{O}$ be a randomised algorithm mapping a data entry in $\mathcal{D}$ to $\mathcal{O}$. The algorithm $\mech$ is $(\epsilon,\delta)$-local differentially private if for all data entries $\vec{x}, \vec{x'} \in \mathcal{D}$ and all outputs $o \in \mathcal{O}$, we have
\begin{equation*}
\Pr\{\mech(\vec{x})=o\} \leq \exp(\epsilon) \Pr\{\mech(\vec{x'})=o\} + \delta
\end{equation*}
If $\delta=0$, $\mech$ is said to be $\epsilon$-local differentially private.
\end{definition}
A formal definition of LDP is provided in Definition~\ref{def:ldp}, The privacy parameter $\epsilon$ captures the privacy loss consumed by the output of the algorithm: $\epsilon=0$ ensures perfect privacy in which the output is independent of its input, while $\epsilon\rightarrow\infty$ gives no
privacy guarantee.

For every pair of adjacent inputs $\vec{x}$ and $\vec{x'}$, differential privacy requires that the distribution of $\mech(\vec{x})$ and $\mech(\vec{x'})$ are ``close'' to each other where closeness are measured by the privacy parameters $\epsilon$ and $\delta$. Typically, the inputs $\vec{x}$ and $\vec{x'}$ are adjacent inputs when 
all the attributes of one record are modified. In real scenario, the adjacent input is an application specific notion. For example, a sentence is divided into several items for every 5 words, and two sentences are considered to be adjacent if they differ by at most 5 consecutive words~\cite{wang2018not}. In this work, we consider a word-level DP, \ie two inputs are considered to be adjacent if they differ by at most 1 word.
For brevity, we use $(\epsilon,\delta)$-DP to represent $(\epsilon,\delta)$-LDP for the rest of the paper.
We remark that all the randomisation mechanisms used for CDP, including Laplace mechanism and Gaussian mechanism~\cite{dwork2014algorithmic}, can be individually used 
by each party to inject noise into local data to ensure LDP before releasing~\cite{lyu2020towards,yang2020local,lyu2020democratise,sun2020federated}. In particular, we adopt Laplace Mechanism which ensures $\epsilon$-DP with $\delta=0$ throughout the paper.

In a nutshell, data universe can be expressed as $\D = (\X, \A, \Y)$, which will be convenient to partition as $(\X, \Y) \times \A$~\cite{jagielski2018differentially}. 
Given one person's record $\vec{x}$, we can write it as a pair $\vec{x} = (\vec{x}_I, \vec{x}_S)$ where $\vec{x}_I \in (\X,\Y)$ represents the insensitive attributes and $\vec{x}_S \in \A$ represents the sensitive attributes. Our main goal is to promise differential privacy only with respect to the sensitive attributes. Write $\vec{x}_S \sim \vec{x}'_S$ to denote that $\vec{x}_S$ and $\vec{x}'_S$ differ in exactly one coordinate (i.e. one word/token in NLP domain). An algorithm is $(\epsilon,\delta)$-\emph{differentially private in the sensitive attributes} if for all $\vec{x}_I \in (\X,\Y)$ and for all $\vec{x}_S \sim \vec{x}_S' \in \A$ and for all $O \subseteq \mathcal{O}$, we have:
$$
\Ps \left[ M(\vec{x}_I, \vec{x}_S) \in O\right] \le e^{\epsilon} \, \Ps \left[ M(\vec{x}_I, \vec{x}'_S) \in O \right] + \delta
$$

\textbf{Post-processing}. DP enjoys a well-known post-processing property
~\cite{dwork2014algorithmic}: any computation applied to the output of an $(\epsilon,\delta)$-DP algorithm remains $(\epsilon,\delta)$-DP. This nice property allows the attacker to implement any sophisticated post-processing function on the privatised representation from the user, without compromising DP or making it less differentially private. 

\section{Main Framework}
\label{sec:Methodology}
\subsection{Attack Scenario}
As indicated in~\secref{sec:introduction}, uploading raw input or representations to a server takes the risk of revealing sensitive information to the eavesdropper who eavesdrops on the hidden representation and tries to recover private information of the input text.
Hence, similar as~\citet{coavoux2018privacy}, we consider an attack scenario during inference phase in Figure~\ref{fig:Attack_Scenario}, which consists of three parts: (i) a \textit{feature extractor} to extract latent representation of any test input $\vec{x}$; (ii) a \textit{main classifier} to predict the label $y$ from the extracted latent representation; (iii) and an \textit{attacker} (eavesdropper) who aims to infer some private information $\mathbf z$ contained in $\vec{x}$, from the latent representation of $\vec{x}$ used by the main classifier. In this scenario, each example consists of a triple $(\vec{x}, y, \mathbf z)$, where $\vec{x}$ is an input text, $y$ is a single label (e.g.\ topic or sentiment), and $\mathbf z$ is a vector of private information contained in $\vec{x}$.
Such attack would occur in scenarios where the computation of a neural network is shared across multiple devices. For example, phone users send their learned representations to the cloud for grammar correction or translation~\cite{li2018towards}, or to obtain the classification result, \eg the topic of the text or its sentiment~\cite{li2017privynet}.

\begin{figure}[!t]
\centering
        \includegraphics[width=8cm]{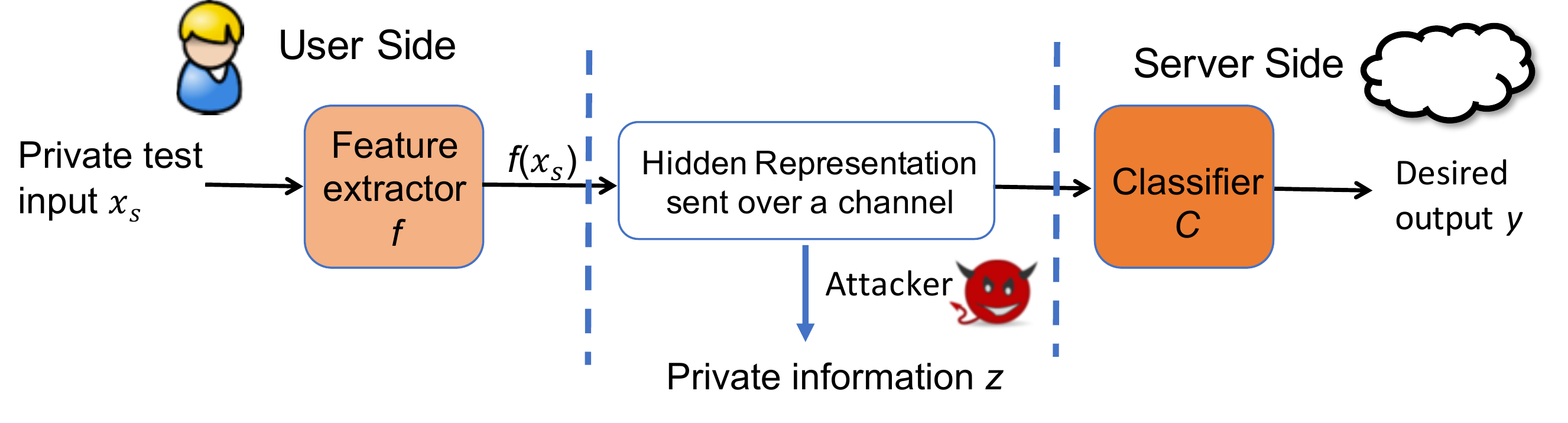}
        \caption{Attack scenario during inference phase
        .}
\label{fig:Attack_Scenario}
\end{figure}

\subsection{Methodology}
To defend against the middle eavesdropper, we aim to design an approach that can preserve privacy of the extracted test representation from the user without significantly degrading the main task performance. To achieve this goal, we introduce a DP noise layer after a predefined feature extractor (determined  by the server), which results in differentially private representation that can be transferred to the server for classification (the topic of the text or its sentiment), as shown in Figure~\ref{fig:PPNR}.

\begin{figure}[!t]
\centering
        \includegraphics[width=8cm]{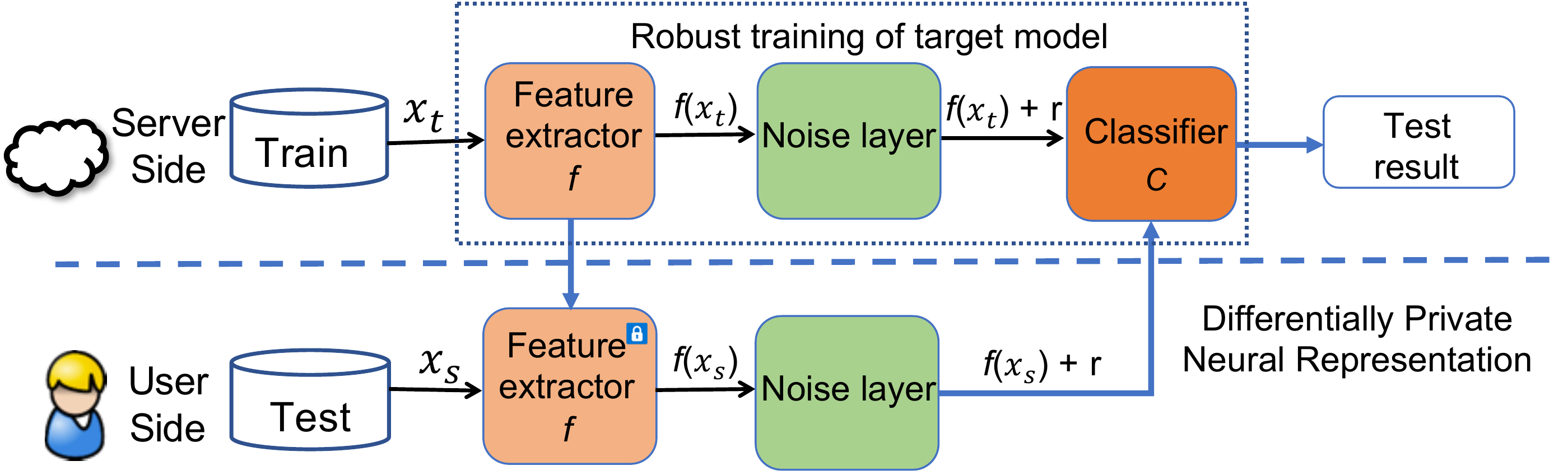}
        \caption{Illustration of the proposed main framework. 
        }
\label{fig:PPNR}
\end{figure}



In terms of model training on the server, theoretically, one could remove the noise layer and conduct non-private training by following Equation~\ref{equ:Standard_Training}:

\begin{eqnarray}
\label{equ:Standard_Training}
\mathcal{L}(\vec{x},\vec{y}) &\!\!\!=\!\!\!&\mathcal{X}(\mathrm{C}(f(\vec{x})), \vec{y})
\end{eqnarray}
where $f$ is the feature extractor, $\mathrm{C}$ is the classifier, $\vec{y}$ is the true label, and $\mathcal{X}$ denotes the cross entropy loss function.

However, doing so may deteriorate test performance, 
due to the 
injected noise in the test representation. To improve model robustness to the noisy representation, we put forward a robust training algorithm 
by incorporating a noise layer which adds the same level of noise as the test phase in the training process as well. Therefore, the robust training objective can be re-written as:
\begin{eqnarray}
\mathcal{L}(\vec{x},\vec{y}) &\!\!\!=\!\!\!&\mathcal{X}(\mathrm{C}(\vec{f(x)}+\vec{r}), \vec{y})
\end{eqnarray}

The detailed robust training process on the server is given in Algorithm~\ref{Algorithm:dp_noisy_training}. After the robust target model is built, server then provides a \emph{feature extractor} $f$ to the user, as illustrated in Figure~\ref{fig:PPNR}.

\subsection{Privacy Guarantee}
Let $f(\vec{x})=\vec{x_r} \in \mathbb{R}^k$ be the extracted representation from $\vec{x}$ by feature extractor $f$, and to apply $\epsilon$-DP to the extracted neural representation, 
we inject Laplace noise $\vec{r}$ to $\vec{x_r}=f(\vec{x})$ 
as follows: 
$$
\hat{\vec{x}}_r=\vec{x_r}+\vec{r} \, ,
$$
where the coordinates $\vec{r}=\{r_1,r_2,\cdots,r_k\}$ are i.i.d. random variables drawn from the Laplace distribution defined by $Lap(b)$, where the noise scale $b=\frac{\Delta f}{\epsilon}$, $\epsilon$ is the privacy budget and $\Delta f$ is the sensitivity of the extracted representation.

\subsubsection{Formal Privacy Guarantee}
Algorithm~\ref{Algorithm:dp_features} outlines how to derive differentially private neural representation from the feature extractor $f$. 
Each user first feeds its masked sensitive record $\tilde{\vec{x}}_s$ into a feature extractor to extract representation $\vec{x_r} \in \mathbb{R}^k$.

Note that to apply additive noise mechanism, the sensitivity $\Delta$ of the output representation $\vec{x_r}=f(\vec{x})$ needs to be determined. Estimating the true sensitivity of $\vec{x_r}$ is challenging. Instead, we follow \newcite{shokri2015privacy} to use input-independent bounds by enforcing a [0,1] range on the extracted representation, hence bounding the sensitivity of each element of the extracted representation with 1, \ie  $\Delta f=1$. Limiting the range of the extracted representation can also improve the training process by helping to avoid overfitting.


\begin{algorithm}[!tp]
\caption{Robust Training on the Server}\label{Algorithm:dp_noisy_training}
\small
\begin{algorithmic}
  \State \textbf{Input:} Training record $(\vec{x_t},\vec{y_t})$; Feature extractor $f$; Classifier $\mathrm{C}$.
  \State 1: Extraction: $\vec{x_r} \gets f(\vec{x_t})$;
  \State 2: Normalization: $\vec{x_r} \gets \vec{x_r}-min(\vec{x_r})/(max(\vec{x_r})-min(\vec{x_r}))$;
  \State 3: Perturbation: $\hat{\vec{x}}_r \gets \vec{x}_r+\vec{r}, r_i \sim Lap(b)$;
  \State 4: Calculate loss $\mathcal{L}=\mathcal{X}(\mathrm{C}(\hat{\vec{x}}_r), \vec{y_t})$ and do backpropagation to update $f$ and 
  $\mathrm{C}$.
\end{algorithmic}
\end{algorithm}


A formal statement for the privacy guarantees of Algorithm~\ref{Algorithm:dp_features} is provided in Theorem~\ref{theorem:rpdp}. 

\begin{theorem}
\label{theorem:rpdp}
Let the entries of the noise vector $\vec{r}$ be drawn from $Lap(b)$ 
with $b=\frac{\Delta f}{\epsilon}$
. Then Algorithm~\ref{Algorithm:dp_features} is $\epsilon$-differentially private.
\end{theorem}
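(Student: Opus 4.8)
The plan is to run the textbook Laplace-mechanism argument, specialised to the adjacency notion of the paper (two sensitive inputs differing in at most one word) and to the per-coordinate noise injected in Algorithm~\ref{Algorithm:dp_features}. First I would fix the insensitive part $\vec{x}_I$ together with two adjacent sensitive records $\vec{x}_S \sim \vec{x}'_S$, and write the two extracted representations as $\vec{x_r}=f(\vec{x}_I,\vec{x}_S)$ and $\vec{x}'_r=f(\vec{x}_I,\vec{x}'_S)$, both lying in $[0,1]^k$ after the normalisation step. Since the mechanism outputs $\hat{\vec{x}}_r=\vec{x_r}+\vec{r}$ with the coordinates $r_i\sim Lap(b)$ drawn i.i.d., the output density at any point $\vec{o}\in\R^k$ factorises over coordinates into a product of one-dimensional Laplace densities centred at $(\vec{x_r})_i$. (The random dropout mask, being independent of $\vec{r}$ and of the chosen input, can be incorporated by conditioning on it: for each fixed mask the map feeding into the noise is a fixed extractor, so the bound below applies conditionally and then extends to the mixture over masks.)

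Next I would form the likelihood ratio of the two output densities at the same point $\vec{o}$. The normalising constants $1/(2b)$ cancel, leaving a single exponential whose argument is $\tfrac{1}{b}\sum_{i=1}^k\big(|o_i-(\vec{x}'_r)_i|-|o_i-(\vec{x_r})_i|\big)$. Applying the reverse triangle inequality coordinate-wise bounds each summand by $|(\vec{x_r})_i-(\vec{x}'_r)_i|$, so the whole ratio is at most $\exp\!\big(\tfrac{1}{b}\|\vec{x_r}-\vec{x}'_r\|_1\big)$. Substituting $b=\Delta f/\epsilon$ turns this into $\exp\!\big(\epsilon\,\|\vec{x_r}-\vec{x}'_r\|_1/\Delta f\big)$, which collapses to the desired $\exp(\epsilon)$ exactly when the $\ell_1$ change of the representation across one-word-adjacent inputs is bounded by $\Delta f$. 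Because the Laplace mechanism has $\delta=0$, I would then integrate this pointwise density bound over an arbitrary measurable output set $O\subseteq\mathcal{O}$ to recover $\Ps[\mech(\vec{x}_I,\vec{x}_S)\in O]\le e^{\epsilon}\,\Ps[\mech(\vec{x}_I,\vec{x}'_S)\in O]$, that is, $\epsilon$-differential privacy in the sensitive attributes.

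The step I expect to be the real obstacle is pinning down the sensitivity constant $\Delta f$. The justification quoted in the text --- enforcing a $[0,1]$ range and thereby bounding \emph{each element} of the representation by $1$ --- controls only the per-coordinate (that is, $\ell_\infty$) deviation, giving $|(\vec{x_r})_i-(\vec{x}'_r)_i|\le 1$ for every $i$. But the exponent produced above is governed by the $\ell_1$ norm $\|\vec{x_r}-\vec{x}'_r\|_1$, which the per-coordinate bound alone only controls by $k$, not by $1$. To make the choice $b=\Delta f/\epsilon=1/\epsilon$ actually yield $\epsilon$-DP, I would therefore need to argue that a single-word change perturbs the extracted representation by at most $1$ in $\ell_1$ --- for instance by reading $\Delta f$ as the genuine $\ell_1$-sensitivity of $f$ rather than an $\ell_\infty$ per-coordinate bound, or by recalibrating the noise to the $\ell_1$-sensitivity. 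Making this identification precise (or, failing that, replacing the constant by the correct $\ell_1$-sensitivity) is where the careful work lies; once $\Delta f$ is correctly interpreted as the $\ell_1$-sensitivity, the remainder is the standard calculation sketched above.
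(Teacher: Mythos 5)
The paper gives no explicit proof of Theorem~\ref{theorem:rpdp}; it simply invokes the Laplace mechanism with $b=\Delta f/\epsilon$ and justifies $\Delta f=1$ by clipping each coordinate of $\vec{x_r}$ to $[0,1]$. Your proposal reconstructs exactly the argument the paper is implicitly relying on --- factorise the product of one-dimensional Laplace densities, bound the likelihood ratio by $\exp\bigl(\|\vec{x_r}-\vec{x}'_r\|_1/b\bigr)$ via the triangle inequality, and integrate over the output set --- so in that sense you are on the same route as the paper.

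The obstacle you flag at the end is, however, a genuine gap in the paper's own justification rather than in your argument, and you have diagnosed it correctly. Clipping the representation to $[0,1]^k$ bounds only the per-coordinate ($\ell_\infty$) change between adjacent inputs; the exponent in the Laplace-mechanism calculation is governed by the $\ell_1$ distance $\|\vec{x_r}-\vec{x}'_r\|_1$, which the clipping bounds only by $k$. With $b=1/\epsilon$ the calculation therefore yields $k\epsilon$-DP for the released $k$-dimensional vector (equivalently, $\epsilon$-DP per coordinate composed over $k$ coordinates), not $\epsilon$-DP as claimed, unless one separately establishes that a one-word change moves $f(\vec{x})$ by at most $1$ in $\ell_1$ --- something neither the paper nor the $[0,1]$ normalisation provides. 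So your proof is the right skeleton, and the honest conclusion is that the theorem as stated holds only under the reading of $\Delta f$ as the true $\ell_1$-sensitivity of the (normalised) extractor; with the paper's per-element justification the correct privacy constant is $k\epsilon$, or the noise scale must be recalibrated to $b=k/\epsilon$. Your handling of the dropout mask by conditioning is fine and does not affect this issue.
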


\begin{algorithm}[t]
\caption{Differentially Private Neural Representation (DPNR)}\label{Algorithm:dp_features}
\small
\begin{algorithmic}
  \State \textbf{Input:} Each sensitive record $\vec{x_s} \in \mathbb{R}^d$; Feature extractor $f$.
  \State \textbf{Parameters:} Dropout vector $\vec{I_n} \in \{0,1\}^d$; 
  \State 1: Word Dropout: $\tilde{\vec{x}}_s \gets \vec{x_s} \odot \vec{I_n}$;
  \State 2: Extraction: $\vec{x_r} \gets f(\tilde{\vec{x}}_s)$;
  \State 3: Normalisation: $\vec{x_r} \gets \vec{x_r}-min(\vec{x_r})/(max(\vec{x_r})-min(\vec{x_r}))$;
  \State 4: Perturbation: $\hat{\vec{x}}_r \gets \vec{x}_r+\vec{r}, r_i \sim Lap(b)$;
    \State \textbf{Output:} Perturbed representation $\hat{\vec{x}}_r$. 
\end{algorithmic}
\end{algorithm}

\subsubsection{Word Dropout Enhances Privacy}
In NLP, each input is a sequence composed of words/tokens $\{w_1,\cdots,w_d\}$. 
Under word-level DP, two sentences are considered to be adjacent inputs if they differ by at most 1 word (\ie 1 edit distance). In this scenario, to lower privacy budget without significantly degrading the inference performance, we borrow the idea of nullification~\cite{wang2018not} and apply it to word dropout.

For each sensitive record $\vec{x_s}$, words are masked by a dropout operation before DP perturbation. Given a sensitive 
input $\vec{x_s}$ that consists of $d$ words, dropout performs word-wise multiplication of $\vec{x_s}$ with $\vec{I_n}$, \ie $\tilde{\vec{x}}_s \gets \vec{x_s} \odot \vec{I_n}$, where $\vec{I_n} \in \{0,1\}^d$. 
In can be either specified by users to mask the highly sensitive words or generated randomly. The number of zeros in $\vec{I_n}$ is determined by $d \cdot \mu$, where $\mu$ is the dropout rate. The zeros are located in $\vec{I_n}$ conforming to the uniform distribution.

As stated in Theorem~\ref{theorem:nullification}, word dropout in combination with any $\epsilon$-differentially private mechanism provides a tighter privacy bound in the context of word-level DP.
A detailed proof follows.

\begin{theorem}\label{theorem:nullification}
Given an input $\vec{x} \in D$, suppose $\mathcal{A}(\vec{x})=f(\vec{x})+\vec{r}$ is $\epsilon$-differentially private, let $\vec{I_n}$ with dropout rate $\mu$ be applied to $\vec{x}$, \ie $\tilde{\vec{x}} = \vec{x} \odot \vec{I_n}$, then $\mathcal{A}(\tilde{\vec{x}})$ is $\epsilon'$-differentially private, where $\epsilon'=ln[(1-\mu)\exp(\epsilon)+\mu]$.
\end{theorem}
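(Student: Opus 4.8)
The plan is to fix an arbitrary pair of adjacent inputs and track how the random word mask acts on the single coordinate in which they differ. By the word-level adjacency used throughout, take $\vec{x} \sim \vec{x}'$ differing in exactly one word, say at position $j$, and consider the masked versions $\tilde{\vec{x}} = \vec{x}\odot\vec{I_n}$ and $\tilde{\vec{x}}' = \vec{x}'\odot\vec{I_n}$. Since the $d\mu$ zeros of $\vec{I_n}$ are placed uniformly, coordinate $j$ is zeroed with probability $d\mu/d = \mu$ and kept with probability $1-\mu$; these two events partition the randomness of the dropout, and I would condition the output law of $\mech(\tilde{\vec{x}})$ on them.

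First I would write $\Pr[\mech(\tilde{\vec{x}})=o]$ as a mixture over the mask. Conditioned on the event that word $j$ is dropped, the masked inputs coincide, $\vec{x}\odot\vec{I_n} = \vec{x}'\odot\vec{I_n}$ (they agreed on every coordinate except $j$, now zero in both), so this branch contributes an identical term to both $\Pr[\mech(\tilde{\vec{x}})=o]$ and $\Pr[\mech(\tilde{\vec{x}}')=o]$, i.e.\ a multiplicative cost of $1$. Conditioned on the event that word $j$ is kept, the two masked inputs still differ in exactly one coordinate, hence remain adjacent, so the assumed $\epsilon$-DP of $\mech$ supplies a factor $e^\epsilon$ between the two conditional output laws. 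These two steps are routine invocations of the coincidence-on-drop and of Definition~\ref{def:ldp}.

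The combination step is where the stated constant appears, and it is the part I would treat most carefully. The natural route is privacy amplification by subsampling: interpreting the mask as sampling the sensitive coordinate with keep-probability $1-\mu$, the dropped branch (cost $1$, weight $\mu$) and the kept branch (cost $e^\epsilon$, weight $1-\mu$) should combine into the single multiplier
\begin{equation*}
e^{\epsilon'} = \mu\cdot 1 + (1-\mu)\cdot e^\epsilon = (1-\mu)e^\epsilon + \mu,
\end{equation*}
which is exactly $\epsilon'=\ln[(1-\mu)\exp(\epsilon)+\mu]$. Concretely I would bound the ratio $\Pr[\mech(\tilde{\vec{x}})=o]/\Pr[\mech(\tilde{\vec{x}}')=o]$ by matching the common dropped mass in numerator and denominator and applying the $e^\epsilon$ factor only to the kept mass, following the coupling argument of amplification-by-subsampling with sampling rate $1-\mu$.

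The hard part, and the step I expect to be the genuine obstacle, is precisely this last combination. A purely mechanical mixture bound of the form $\mu\,D(o) + (1-\mu)e^\epsilon K(o) \le e^{\epsilon'}[\mu\,D(o)+(1-\mu)K(o)]$ does \emph{not} hold for all nonnegative shared and kept masses $D(o),K(o)$: because the Laplace noise has full support, an adversary can place $o$ so that the kept-branch mass dominates, and driving $D(o)$ toward zero recovers only the trivial factor $e^\epsilon$. Obtaining the tighter constant therefore requires the subsampling-style coupling in which \emph{both} numerator branches are compared against one common denominator mass, which in turn hinges on treating the differing word as present-versus-absent rather than as a symmetric substitution. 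I would thus spend the bulk of the proof establishing that coupling for the word-mask distribution, since everything else, namely the two-branch decomposition and the $\epsilon$-DP invocation, is immediate.
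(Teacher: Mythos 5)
Your decomposition is the same as the paper's: condition on whether the mask zeroes the single differing coordinate, observe that the dropped branch makes the two masked inputs coincide while the kept branch leaves them adjacent, and combine with weights $\mu$ and $1-\mu$. The two cases are handled exactly as in the paper. The step you single out as the genuine obstacle --- the combination --- is also exactly the step at which the paper's own proof is not airtight, and your analysis of why the naive mixture bound fails is correct. Writing $D=\Pr\{\mathcal{A}(\vec{x_1}\odot\vec{I_n})=S\mid I_{ni}=0\}=\Pr\{\mathcal{A}(\vec{x_2}\odot\vec{I_n})=S\mid I_{ni}=0\}$ and $K_j=\Pr\{\mathcal{A}(\vec{x_j}\odot\vec{I_n})=S\mid I_{ni}=1\}$, the two cases yield
\begin{equation*}
\Pr\{\mathcal{A}(\tilde{\vec{x}}_1)=S\}\le \mu D+(1-\mu)e^{\epsilon}K_2,
\qquad
\Pr\{\mathcal{A}(\tilde{\vec{x}}_2)=S\}= \mu D+(1-\mu)K_2,
\end{equation*}
and the ratio of the right-hand sides tends to $e^{\epsilon}$, not to $(1-\mu)e^{\epsilon}+\mu$, as $D/K_2\to 0$. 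The paper reaches the stated constant only by writing the mixture in terms of the \emph{unconditional} probabilities, $\Pr\{\cdot\}=\mu\Pr\{\cdot\}+(1-\mu)\Pr\{\cdot\}$, and then bounding the two copies term-by-term against the same unconditional denominator; this silently identifies both $D$ and $K_2$ with $\Pr\{\mathcal{A}(\tilde{\vec{x}}_2)=S\}$, which is unjustified. So you have not produced a complete proof, but you have correctly located the gap rather than introduced one.

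Your proposed repair via amplification by subsampling is the standard route to a constant of this form ($\ln[1+(1-\mu)(e^{\epsilon}-1)]=\ln[(1-\mu)e^{\epsilon}+\mu]$ with keep-probability $1-\mu$), and you are right that it hinges on the differing record being \emph{absent} from one of the two adjacent inputs: only then is the denominator's output law independent of the mask bit and equal to the shared mass $D$, which is what lets the $e^{\epsilon}$ factor be charged against the whole denominator rather than against $K_2$ alone. Under the substitution adjacency the paper adopts ($x_{1i}=v$ versus $x_{2i}\neq v$, both genuine words), that identity is unavailable, because keeping versus dropping position $i$ changes the output law of $\mathcal{A}(\tilde{\vec{x}}_2)$ as well. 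A clean statement must therefore either (i) switch to presence/absence adjacency, where the subsampling coupling delivers the claimed $\epsilon'$, or (ii) retain substitution adjacency and settle for the weaker conclusion that dropout does not increase the budget beyond $\epsilon$. Your instinct to spend the bulk of the effort on establishing the coupling, rather than on the two easy cases, is the right one; the paper spends it nowhere.
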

\begin{proof}
Suppose there are two adjacent inputs $\vec{x_1}$ and $\vec{x_2}$ that differ only in the $i$-th coordinate (word), say $x_{1i} = v$, $x_{2i} \neq v$. For arbitrary binary vector $\vec{I_n}$, after dropout, $\tilde{\vec{x}}_1 = \vec{x_1} \odot \vec{I_n}$, $\tilde{\vec{x}}_2 = \vec{x_2} \odot \vec{I_n}$, there are two possible cases, \ie $I_{ni} = 0$, and $I_{ni} = 1$. 

Case 1: $I_{ni} = 0$. Since $\vec{x_1}$ and $\vec{x_2}$ differ only in $i$-th coordinate, after dropout, $\tilde{x}_{1i} = \tilde{x}_{2i} =0$, hence $\vec{x_1} \odot \vec{I_n}= \vec{x_2} \odot \vec{I_n}$. It then follows
\begin{equation*}
	\Pr\{\mathcal{A}(\vec{x_1} \odot \vec{I_n})=S\} = \Pr\{\mathcal{A}(\vec{x_2} \odot \vec{I_n})=S\}.
\end{equation*}

Case 2: $I_{ni} = 1$. Since $\vec{x_1}$ and $\vec{x_2}$ differ only in the value of their $i$-th coordinate, after dropout, $\tilde{x}_{1i} = x_{1i} = v$, $\tilde{x}_{2i} = x_{2i} \neq v$, hence $\tilde{\vec{x}}_1$ and $\tilde{\vec{x}}_2$ remain adjacent inputs that differ only in $i$-th coordinate. Because $\mathcal{A}(\vec{x})$ is $\epsilon$-differentially private, it then follows
\begin{equation*}
\resizebox{0.99\hsize}{!}{
$\Pr\{\mathcal{A}(\vec{x_1} \odot \vec{I_n})=S\} \leq \exp(\epsilon) \Pr\{\mathcal{A}(\vec{x_2} \odot \vec{I_n})=S\} \, . $
}
\end{equation*}

Combine these two cases, and use the fact that $\Pr[I_{ni} = 0] = \mu$, we have:
\begin{equation*}
\resizebox{0.99\hsize}{!}{
$
\begin{aligned}
&\Pr\{\mathcal{A}(\vec{x_1} \odot \vec{I_n})=S\} \\
&=\mu\Pr\{\mathcal{A}(\vec{x_1} \odot \vec{I_n})=S\}+(1-\mu)\Pr\{\mathcal{A}(\vec{x_1} \odot \vec{I_n})=S\} \\
&\leq \mu\Pr\{\mathcal{A}(\vec{x_2} \odot \vec{I_n})=S\}+(1-\mu)[\exp(\epsilon)\Pr\{\mathcal{A}(\vec{x_2} \odot \vec{I_n})=S\}
\\
&=[(1-\mu)\exp(\epsilon)+\mu]\Pr\{\mathcal{A}(\vec{x_2} \odot \vec{I_n})=S\}
\\
&=\exp\{ln[(1-\mu)\exp(\epsilon)+\mu]\}\Pr\{\mathcal{A}(\vec{x_2} \odot \vec{I_n})=S\}
\end{aligned}
$
}
\end{equation*}

Therefore, after dropout, the privacy 
budget is lowered to $\epsilon'=ln[(1-\mu)\exp(\epsilon)+\mu]$.
\end{proof}

Since the perturbed representation 
$\mathcal{A}(\vec{x})=f(\vec{x})+\vec{r}$ is $\epsilon$-differentially private, combining dropout beforehand, the privacy budget is lowered to $\epsilon'=ln[(1-\mu)\exp(\epsilon)+\mu]$
, hence improving privacy guarantee.
Apparently, a high value of $\mu$ has a positive impact on the privacy but a potential negative impact on the utility. In particular, when $\mu=1$, all $d$ words will be masked, which gives the highest privacy, \ie $\epsilon'=0$, but totally destroys inference performance. Hence, a smaller value of $\mu$ is preferred to trade off privacy and accuracy.

\section{Experiments}
In this section, we conduct comprehensive studies over different tasks and datasets to examine the efficacy of the proposed algorithm from three facets: 1) main task performance, 2) privacy and 3) target model fairness.
\label{sec:Performance}

\subsection{Task and Dataset}
We use two natural language processing tasks: 1) sentiment analysis and 2) topic classification, with a range of benchmark datasets across various domains.
\tabref{tab:data} summarises the statistics of the used datasets.

\subsubsection{Sentiment Analysis}
Trustpilot Sentiment dataset~\cite{hovy2015user} contains reviews associated with a sentiment score on a five
point scale, and each review is associated with 3 attributes: gender, age and location, which are self-reported by users.
The original dataset is comprised of reviews from different locations, however in this paper, we only derive \resource{tp-us} for our study.
Following \citet{coavoux2018privacy}, we extract examples containing information of both gender and age, and treat them as the private information.
We categorise ``age'' into two groups: ``under 34'' (\resource{u34}) and ``over 45'' (\resource{o45}).

\subsubsection{Topic Classification}
For topic classification, we focus on two genres of documents: news articles and blog posts. 

\paragraph{News article} We use \resource{ag} news corpus~\cite{del2005ranking}.
To ensure a fair comparison, we use the corpus preprocessed by ~\citet{coavoux2018privacy}\footnote{\url{https://github.com/mcoavoux/pnet/tree/master/datasets}.
We use both ``title'' and ``description'' fields as the input document.}.
And the task is to predict the topic label of the document, with four different topics in total.

Regarding the private information in \resource{ag}, named entities appearing in text are vulnerable to privacy leakage inferred by attackers.
In order to simulate the attack, we firstly adopt the NLTK NER system \cite{bird2009natural} to recognise all ``Person'' entities in the corpus.
Then we retain the five most frequent person entities and use them as the private information.
Due to the sparsity of name entities, each target entity only appears in very few articles.
Hence we select the examples containing at least one of these named entities to mitigate the unbalance and data scarcity.
Thus, the attacker aims to identify these five entities as five independent binary classification tasks.

\paragraph{Blog posts} We derive a blog posts dataset (\resource{blog}) from the blog authorship corpus presented~\cite{schler2006effects}.
However, the original dataset only contains a collection of blog posts associated with authors' age and gender attributes but does not provide topic annotations.
Thus we follow~\citet{coavoux2018privacy} to run the LDA algorithm~\cite{blei2003latent} with the topic number of 10 on the whole collection to identify the topic label of each document.
Afterwards, we selected posts with single dominating topic ($> 80\%$) and discarded the rest, which results in a dataset with 10 different topics.
Similar to \resource{TP-US}, the private variables are comprised of the age and gender of the author.
And the age attribute is binned into two categories, ``under 20'' (\resource{U20}) and ``over 30'' (\resource{O30}).

For all three datasets, we randomly split the preprocessed corpus into training, development and test by 8:1:1.

\begin{table}[t]
    \centering
\resizebox{\columnwidth}{!}{
    \begin{tabular}{llrrr}
    
Dataset & Private Variable & \#Train & \#Dev& \#Test \\
    \toprule
\resource{tp-us} & age, gender & 22,142  & 2,767 & 2,767\\
\resource{ag} & entity & 11,657 & 1,457 &1,457 \\
\resource{blog} & age, gender & 7,098 &887 &887\\
      
     \bottomrule
 \end{tabular}
 }
    \caption{
    Summary of three pre-processed datasets.}
    \label{tab:data}
\end{table}

\subsection{Evaluation Metrics}
\label{sec:prot}
Similar to \citet{coavoux2018privacy}, we define \textit{sentiment analysis} and \textit{topic classification} as the main tasks, whereas the inference of private information is considered as the auxiliary tasks of attackers.
Each auxiliary task is eavesdropped by one attacker.

We use accuracy to assess the performance for both main tasks.
The auxiliary tasks are evaluated via the following metrics:
\begin{compactitem}
    \item For demographic variables (\ie gender and age): $1-X$, where $X$ is the average over the accuracies of the prediction by the attacker on these variables.
    \item For named entities: $1-F$, where $F$ is the F1 score between the ground truths and the prediction by the attacker on the presence of all named entities.
\end{compactitem}

We denote the value of 1-$X$ or 1-$F$ as \textit{empirical privacy}, \ie the inverse accuracy or F1 score of the attacker, higher means better empirical privacy, \ie lower attack performance.

\begin{table}[!t]

\centering
\scalebox{0.90}{
\begin{tabular}{ccccc}
   & $\epsilon$ & \resource{tp-us} & \resource{ag} & \resource{blog} \\
\toprule
\multicolumn{1}{c}{\resource{non-priv}} & &85.53&78.75 & \textbf{97.07} \\
\midrule
\multirow{5}{*}{\DPNR} 
& 0.05 &85.65  & \textbf{80.87} &  96.69\\
& 0.1 & 85.52 & 80.78 &  96.39\\ 
& 0.5 & 85.52 & 79.71 & 96.84 \\ 
& 1 &85.36  & 79.36 & 96.39 \\ 
& 5 &\textbf{85.87}  & 79.59 & 96.66 \\ 
\bottomrule
\end{tabular}
}
\caption{Main task accuracy [\%] of \resource{non-priv} and \DPNR over 3 datasets with varying $\epsilon$ and fixed $\mu=0$.} 
\label{tbl:accuracy}
\end{table}

\subsection{Model Selection}
\label{sec:experiment_settings}
\textbf{Model and Parameters}. For implementation, owing to its success across multiple NLP tasks, we apply \textit{BERT} base ~\cite{devlin2019bert} to the classification tasks.
Specially, BERT takes a text input, then generates a representation which embeds holistic information.
We apply a dropout to this representation before a softmax layer, which is responsible for label classification.
We run 4 epochs on the training set, and choose the checkpoint with the best loss on the dev set.

After we obtain a well-trained target model, we partition it into two parts, BERT model acts as the feature extractor $f$ in Figure~\ref{fig:PPNR}, which could be deployed on users' devices, while the remaining layers act as the classifier on the server.
In our implementation, privacy is enforced in the hidden representation extracted by the feature extractor as shown by Algorithm~\ref{Algorithm:dp_features}.
For attack classifier, we utilise a 2-layer MLP with 512 hidden units and ReLU activation trained over the target model, which delivers the best attack performance on the dev set in our preliminary experiments.

We report the averaged results over 5 independent runs for all experiments.

\subsection{Performance Analysis of Target Model}
\label{sec:experiment_evaluation}
Firstly, we would like to study how the privacy parameters ($\epsilon,\mu$) in Theorem~\ref{theorem:rpdp} and \ref{theorem:nullification} affect the accuracy of main tasks. We investigate this using different parameter settings, varying one parameter while fixing the other.

\subsubsection{Impact of Privacy Budget $\epsilon$}
\label{sec:p_budget}
To analyse the impact of different privacy budget $\epsilon$ on accuracy, we choose $\epsilon \in \{0.05,0.1,0.5,1,5\}$ with fixed $\mu=0$.
Noted that to provide reasonable privacy guarantee, $\epsilon$ should be set below 10~\cite{hamm2015crowd,abadi2016deep}.
Moreover, $\epsilon \leq 1$ means a relatively tight privacy guarantee. Surprisingly, there is no obvious relationship between accuracy and $\epsilon$. We speculate the denoising training procedure of BERT and layernorm~\cite{ba2016layer} make BERT resistant to the injected noises, which can maintain the performance of the main tasks. We will conduct an in-depth study on this in the future.

Table~\ref{tbl:accuracy} shows that in most cases, our method can achieve comparable performance to the non-private baseline, across all $\epsilon$ even when the noise level is high ($\epsilon=0.05$), which validates the robustness of our method to DP noise.
It also implies that the DP-noised representation not only preserves privacy, but also retains general information for the main task.

\subsubsection{Impact of Dropout Rate $\mu$}
Similarly, we study how the word dropout rate $\mu$ affects accuracy-privacy trade-off.
\tabref{tbl:drop_accuracy} reports the performance of different models under different $\mu \in \{0.1,0.3,0.5,0.8\}$ with fixed $\epsilon=1$.
In most cases, as $\mu$ becomes larger, accuracy starts to degrade as expected. 
However, as indicated in Theorem~\ref{theorem:nullification}, higher $\mu$ results in better privacy as well. Moreover, $\mu=0.5$ can still provide a relatively high accuracy, while privacy budgets are reduced to $\epsilon'=ln[(1-\mu)\exp(\epsilon)+\mu]=0.62$.

\begin{table}[t]

\centering
\scalebox{0.90}{
\begin{tabular}{ccccc}
   & $\mu$ & \resource{tp-us} & \resource{ag} & \resource{blog} \\
\toprule
\multicolumn{1}{c}{\resource{non-priv}} & &\textbf{85.53}&78.75 & \textbf{97.07} \\
\midrule
\multirow{4}{*}{\DPNR} 
& 0.1 &85.53  &\textbf{80.71}  & 96.05 \\
& 0.3 &84.85  & 79.18 & 93.76 \\
& 0.5&83.51  &77.42  & 90.98 \\
& 0.8 & 80.70 &69.57  & 82.94 \\ 
\bottomrule
\end{tabular}
}
\caption{Main task accuracy [\%] of \resource{non-priv} and \DPNR over 3 datasets with varying $\mu$ and fixed $\epsilon=1$.} 
\label{tbl:drop_accuracy}
\end{table}

Overall, both results demonstrate that our \DPNR can protect privacy of the extracted representations of user-authored text, without significantly affecting the main task performance.

\subsection{Attack Model}
\label{sec:attack_model}
Apart from formal privacy guarantee from DP, we use the performance of the diagnostic classifier of the attackers 
for empirical privacy.
To fairly compare with the standard training and adversarial training in previous work~\cite{coavoux2018privacy}, we train an attack model that is trying to predict private variables from the representation.
We measure the empirical privacy of a hidden representation by the ability of an attacker to predict accurately specific private information from it.
If its empirical privacy (\cf Section \ref{sec:prot}) is low, then an eavesdropper can easily recover information about the input.
In contrast, a higher empirical privacy (close to that of a most-frequent label baseline) suggests that $\vec{x_r}$ mainly contains useful information for the main task, while other private information is erased.

\begin{figure}
    \centering
    \includegraphics[scale=0.33]{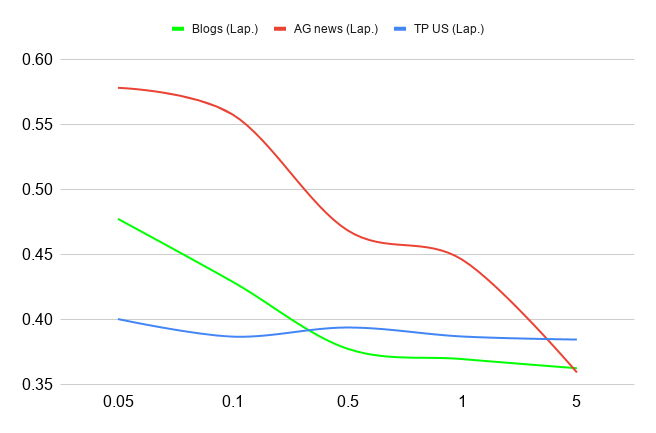}
    \caption{Results of privacy protection over \resource{tp-us}, \resource{ag} and \resource{blog} datasets across different differential privacy budgets. X-axis is the differential privacy budget $\epsilon$, while Y-axis indicates the empirical privacy (see \secref{sec:prot}).}
    \label{fig:all_privacy}
\end{figure}

To study the relationship between DP and empirical privacy, we numerically investigate the impact of the different differential privacy budgets on empirical privacy.
Recall that the empirical privacy is measured by 1-$X/F$, and the higher is better.
\figref{fig:all_privacy} shows that with the increase of the budget, empirical privacy across all datasets demonstrate a decreasing trend, especially for \resource{ag}, which well aligns with DP where the higher value of $\epsilon$ implies lower formal privacy guarantee.
Since $\epsilon=0.05$ provides the best privacy guarantee, we fix $\epsilon=0.05$ and $\mu=0$ as a default setting in the rest of this section, unless otherwise mentioned.

\paragraph{How private are the noisy neural representations?}

For empirical privacy, we investigate whether our \DPNR can provide better attack resistance compared with the adversarial learning (\resource{adv}) \cite{coavoux2018privacy} and non-private training method (\resource{non-priv}), which indicates a lower bound.
We also report the majority class prediction (\resource{majority}) as an upper bound.

\tabref{tab:priv} shows that the attack model can indeed recover private information with reasonable accuracy when targeting towards the non-private representations, manifesting that representations inadvertently capture sensitive information about users, apart from the useful information for the main task.
By contrast, our \resource{dpnr} significantly reduces the amount of information encoded in the extracted representation, as validated by the substantially higher empirical privacy than 
\resource{non-priv} across all datasets.
We also observe that our \resource{dpnr} achieves comparable empirical privacy to the majority class (\resource{majority}), and consistently outperforms the adversarial learning (\resource{adv}) from~\citet{coavoux2018privacy}, which confirms the argument of~\citet{elazar2018adversarial} that adversarial learning can not fully remove sensitive demographic traits from the data representations. 
Conversely, the post-processing property of DP ensures that the privacy loss of the extracted representation cannot be increased even by the most sophisticated attacker.

This claim can be further confirmed by \tabref{tab:ag_attack}, which reports the accuracy of the attacker on classifying whether a named entities is absent or presented in the document over \resource{ag}\footnote{For space limitation, we only report 3 of 5 entities and the results of other two are similar.}.
Generally, both \resource{adv} and \resource{dpnr} can reduce attack accuracy, misleading the attacker classifier to predict most of the shared representations as majority (A).
While our \resource{dpnr} significantly outperforms both \resource{non-priv} and \resource{adv}, corroborating our analysis above.

\begin{table}[t]
    \centering
    \resizebox{\columnwidth}{!}{%
    \begin{tabular}{lcccccc}
    

\multirow{2}{*}{} &\multicolumn{2}{c}{\resource{tp-us}} & \multicolumn{2}{c}{\resource{ag}} & \multicolumn{2}{c}{\resource{blog}}\\
\cmidrule(lr){2-3}  \cmidrule(lr){4-5} \cmidrule(lr){6-7}
       & Main &Priv.   &Main &Priv.   &Main &Priv.   \\
\toprule
\resource{majority} & 79.40  & 36.39 & 57.79  & 49.34 & 34.16  & 46.96\\
\midrule
\resource{non-priv} & 85.53  & 34.71 & 78.75  & 23.24 &  97.07 &33.88\\
\resource{adv} & -0.25 & +0.67 & -21.71 &+26.43 &-2.44  &+1.16\\
\resource{dpnr} & \bfseries +0.12 &\textbf{+3.66} & \textbf{+2.12} & \textbf{+31.13} &-0.38 &\textbf{+15.86}\\
\bottomrule

\end{tabular}%
}
\caption{Results of the main task and the privacy-protected task on the test sets over different datasets.
The relative values are based on \resource{non-priv} method and \textbf{bold} indicates our \resource{dpnr} achieves better performance than other methods. (See \secref{sec:prot} for details for metrics.)}
    \label{tab:priv}
\end{table}

\begin{table}[t]
    \centering
    \scalebox{0.74}{
    \begin{tabular}{lcccccc}
    
&  \multicolumn{2}{c}{Entity 3} & \multicolumn{2}{c}{Entity 4}& \multicolumn{2}{c}{Entity 5}\\
\cmidrule(lr){2-3} \cmidrule(lr){4-5} \cmidrule(lr){6-7}
& A & P & A & P &  A & P  \\
\toprule
ratio [\%] & 82 & 18 & 90 & 10 & 91 & 9\\
\resource{non-priv} & 96.71  & 81.99 & ~~99.43 &47.20 &~~96.93&68.29\\
\resource{adv} & 98.57  & 39.71 & 100.00 &~~0.00 &~~99.87&12.96\\
\resource{dpnr}&  90.86 & ~~8.46  &100.00  &~~0.00&100.00&~~0.00\\
      
\bottomrule
 \end{tabular}}
\caption{Accuracy of attack classifier on absence (A) and presence (P) classification of 3 entities over \resource{ag}.}
    \label{tab:ag_attack}
\end{table}



\subsection{Target Model Fairness}
Recently, fairness concern has gained lots of attention in NLP community \cite{bolukbasi2016man,zhao2017men,chang2019bias,lu2018gender,sun2019mitigating}.
Depending on the literature, fairness can have different interpretation.
In this section, we further consider the relation between differential privacy and fairness.
We ask the research question \emph{whether differential privacy noise can help enhance model fairness?}
We focus on a particular scenario of fairness, that is given a specific demographic variable (e.g. gender) a fair model should deliver an equal or similar performance over the subgroups (e.g. male vs. female) \cite{rudinger-etal-2018-gender,zhao-etal-2018-gender}.

To empirically evaluate the fairness, we take inspirations of \citet{rudinger-etal-2018-gender,zhao-etal-2018-gender,li2018towards} and partition the test data into sub-groups by the demographic variables, \ie \textbf{age, gender} and five person \textbf{entities}.
Different from predicting demographic variables in attacker (\secref{sec:attack_model}), we measure the main task accuracy difference among subgroups of demographic variables.

In fact, we noticed \DPNR can also help mitigate the bias in the representations with respect to the specific demographic or identity attributes, such that the decisions made by 
our robust target model are able to improve the fairness among the concerned demographic groups.

\begin{table}[t]
    \centering
     \scalebox{0.85}{
    \begin{tabular}{llcccc}
    
& &\multicolumn{2}{c}{Gender} & \multicolumn{2}{c}{Age} \\
\cmidrule(lr){3-4} \cmidrule(lr){5-6}
       & & F & M   & U & O   \\
\toprule
      \multirow{3}{*}{\resource{tp-us}} & ratio [\%] & 37  & 63 & 64 & 36\\
      &\resource{non-priv} &  83.69 & +1.57 & 84.63 &+0.02 \\
      &\resource{Adv.} & 84.95& +0.19&85.38 &-0.46\\
      &\resource{dpnr}&  85.90 & \textbf{+0.49}  &86.08  &+0.31\\
      \midrule
      \multirow{3}{*}{\resource{blog}} &ratio [\%] & 52  & 48 & 46 & 54\\
      &\resource{non-priv} & 98.07  & -2.18 & 97.05 & -1.49\\
      &\resource{Adv.} & 93.84& -7.54 &91.91 & -3.57\\
      &\resource{dpnr}&  98.00 &-2.34  & 97.09 &\textbf{-0.11}\\
      
     \bottomrule
 \end{tabular}}
    \caption{The accuracy of main tasks among different demographic groups ({age} and {gender}) on \resource{tp-us} and \resource{blog}. ``Ratio'' means the ratio between two subgroups of the demographic variable. The relative values ({M} and {O}) of right subgroups are deviated from the left subgroups ({F} and {U}) accordingly. }
    \label{tab:fair_1}
\end{table}

\begin{figure}[t]
  \centering
  \subfloat[\resource{non-priv}]
   {\label{fig:t_dp}\includegraphics[scale=0.18]{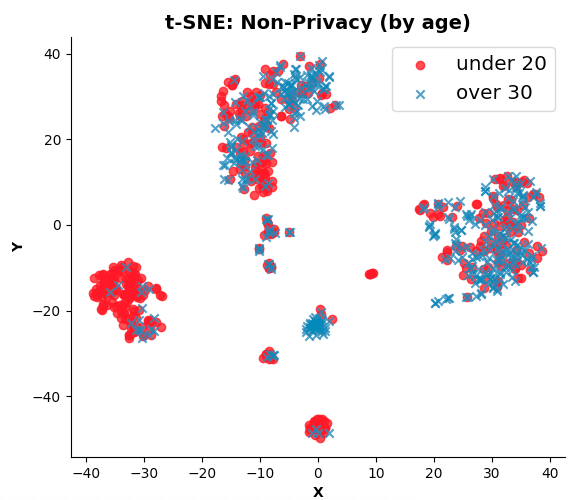}}
 ~
  \subfloat[\resource{dpnr}]
  {\label{fig:t_ndp}\includegraphics[scale=0.18]{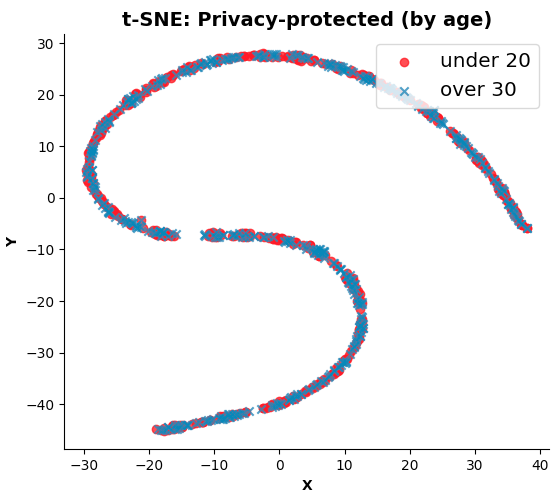}}
  \caption{t-SNE plots of the extracted representations over two {age} subgroups (\textcolor{red}{\resource{u20}} and \textcolor{blue}{\resource{o30}}) of \resource{blog} using \resource{non-priv} method and proposed \resource{dpnr}.}
   \label{fig:tsne}
\end{figure}

First of all, as the distribution of the demographic groups in \resource{tp-us} and \resource{blog} datasets is relatively even, 
hence there is no significant deviation on the main tasks (see \tabref{tab:fair_1}).
However, we still observe an noticeable difference for the \textbf{age} group in \resource{blog} and the \textbf{gender} attribute in \resource{tp-us}. 
To help better understand the phenomenon, we perform further analysis by plotting the non-private and differentially private representations of \textbf{age} on \resource{blog} in \figref{fig:tsne}.
It can be clearly observed that the patterns of two subgroups are much easier to be distinguished in the non-private representations, while the differentially private representations mostly mix the representations of ``under 20'' and ``over 30''.
We speculate that this is a consequence of the regularising effect of DP.

\begin{table}[t]
    \centering
\resizebox{\columnwidth}{!}{
    \begin{tabular}{lcccccc}
&  \multicolumn{2}{c}{Entity 3} & \multicolumn{2}{c}{Entity 4}& \multicolumn{2}{c}{Entity 5}\\
\cmidrule(lr){2-3} \cmidrule(lr){4-5} \cmidrule(lr){6-7}
& A & P & A & P &  A & P  \\
\toprule
ratio [\%] & 82 & 18 & 90 & 10 & 91 & 9\\
\resource{non-priv} & 82.67  & -18.82 & 80.86 &-15.44 &80.22&-10.78\\
\resource{Adv.} & 48.92  & +7.82 & 49.64 &+6.67 &49.43&+9.6\\
\resource{dpnr}&  83.60 &\textbf{-16.93}  &81.79  &\textbf{-12.22}&81.04&\textbf{-6.04}\\
\bottomrule
 \end{tabular}
 }
\caption{The accuracy of main tasks among three name entities on \resource{ag}. {A} means when the entity is absence, while {P} indicates the presence of that entity. The relative values are deviated from the subgroup {A}. \textbf{bold} means a statistically significant (p\textless0.0001) fairness improvement.}
    \label{tab:fair_2}
\end{table}

\tabref{tab:fair_2} shows the fairness results on \resource{ag}, where we observe the entity distributions are skewed and the prediction of the \resource{non-priv} model on the dominant groups is significantly superior to the minority groups, which causes a severe violation in terms of the fairness.
Even under such circumstance, our \DPNR 
method can mitigate this skewed bias, achieving more fair prediction than other baselines.

\section{Discussion}
Privacy and fairness are two emerging but important areas in NLP community. Prior efforts predominantly focus on either privacy or fairness~\cite{li2018towards,coavoux2018privacy,rudinger-etal-2018-gender,zhao-etal-2018-gender,lyu2020towards}, but there is no systematic study on how privacy and fairness are related. This work fills this gap, and discovers the impact of differential privacy on model fairness. We empirically show that privacy and fairness can be simultaneously achieved through differential privacy.

We hope that this work highlights the need for more research in the development of effective countermeasures to defend against privacy leakage via model representation and mitigate model bias in a general sense, and \textit{not} only specific to a particular attack. More generally, we hope that our work spurs future interest into developing a better understanding of why differential privacy works.

Meanwhile, differential privacy may incur a reduction in the model's accuracy. It is worthwhile to explore how to get a better trade-off between privacy, fairness and accuracy.

\section{Conclusion and Future Work}
\label{sec:Conclusion}
In this paper, we take the first effort to 
build differential privacy into the extracted neural representation of text during inference phase. 
In particular, we prove that 
masking the words in a sentence via dropout can further enhance privacy. To maintain utility, we propose a novel robust training algorithm that incorporates a noisy layer into the training process to produce the noisy training representation. 
Experimental results on benchmark datasets across various tasks, and parameter settings demonstrate that our approach ensures representation privacy without significantly degrading accuracy.
Meanwhile, our DP method 
helps reduce the effects of model discrimination in most cases, achieving better fairness than the non-private baseline.
Our work makes a first step towards understanding the connection between privacy and fairness in NLP -- which were previously thought of as distinct classes. Moving forward, we believe that our results justify a larger study on various NLP applications and models, which will be our immediate future work. 

\section*{Acknowledgments}
We would like to thank the anonymous reviewers for their valuable feedback.

\bibliographystyle{acl_natbib}
\bibliography{emnlp2020}

\end{sloppypar} 
\end{document}